\newsavebox\CBox
\def\textBF#1{\sbox\CBox{#1}\resizebox{\wd\CBox}{\ht\CBox}{\textbf{#1}}}
\icmltitlerunning{Random Function Priors for Correlation Modeling}
\begin{document}

\twocolumn[
\icmltitle{Random Function Priors for Correlation Modeling}



\icmlsetsymbol{equal}{*}

\begin{icmlauthorlist}
\icmlauthor{Aonan Zhang}{aonanjohn}
\icmlauthor{John Paisley}{aonanjohn}
\end{icmlauthorlist}

\icmlaffiliation{aonanjohn}{Department of Electrical Engineering \& Data Science Institute,
Columbia University, New York, USA}

\icmlcorrespondingauthor{Aonan Zhang}{az2385@columbia.edu}

\icmlkeywords{Random measures, deep learning}

\vskip 0.3in
]



\printAffiliationsAndNotice{}  

\begin{abstract}
The likelihood model of high dimensional data $X_n$ can often be expressed as $p(X_n|Z_n,\theta)$, where $\theta\mathrel{\mathop:}=(\theta_k)_{k\in[K]}$ is a collection of hidden features shared across objects, indexed by $n$, and $Z_n$ is a non-negative factor loading vector with $K$ entries where $Z_{nk}$ indicates the strength of $\theta_k$ used to express $X_n$. In this paper, we introduce random function priors for $Z_n$ for modeling correlations among its $K$ dimensions $Z_{n1}$ through $Z_{nK}$, which we call \textit{population random measure embedding} (PRME). Our model can be viewed as a generalized paintbox model~\cite{Broderick13} using random functions, and can be learned efficiently with neural networks via amortized variational inference. We derive our Bayesian nonparametric method by applying a representation theorem on separately exchangeable discrete random measures.

\end{abstract}

\section{Introduction}
\label{sec:intro}
Let $X=[X_1,\ldots,X_N]$ be a group of {exchangeable} high dimensional observations, where $X_n\in\bbR^d$. In this paper, we assume $X$ is generated by the model
\begin{align}
p(X) & = \int p(X,Z,\theta)\, dZ d\theta \nonumber\\
& = \int p(Z)p(\theta) \prod_{n\in[N]}p(X_n|Z_n,\theta)\, dZd\theta,
\label{eq:basic_model}
\end{align}
where $p(X_n|Z_n,\theta)$ is a likelihood model conditioned on latent features $\theta\mathrel{\mathop:}=(\theta_k)_{k\in[K]}$ that are shared across the population. $Z_n\mathrel{\mathop:}=[Z_{n1},\ldots,Z_{nK}]$ is a non-negative vector for the $n$th observation, where $Z_{nk}$ determines the extent to which $\theta_k$ is used to express $X_n$. For example, in topic models~\cite{Blei03}, $Z_n$ is a discrete distribution over topics, where $Z_{nk}$ represents the proportion of words in document $n$ sampled from topic $k$. In sparse factor models~\cite{Griffiths11}, $Z_n$ is a binary vector such that latent feature $\theta_k$ contributes to the likelihood if and only if $Z_{nk}=1$. We generically refer to $Z_n$ as a ``non-negative feature loading vector." For exchangeable $X$, it is often assumed the $Z_n$ are exchangeable as well. If we take $Z$ as a feature loading matrix with $Z_n$ as its rows, then $Z$ is \textit{row exchangeable}. By de Finetti's theorem, we can represent
\begin{align}
p(Z)=\int\prod_{n\in\bbN}p(Z_n|\zeta)p(\zeta)d\zeta,
\label{eq:representation_rows}
\end{align}
for some random object $\zeta$. (We let $N=\infty$ in order to apply de Finetti's theorem.) The goal of this paper is to model complex correlations among entries of $Z_n$. Following a common practice, we put an independent prior on $\theta$, $p(\theta)=\prod_{k}p(\theta_k)$ and focus on modeling $p(Z)$.

A straightforward way to model correlation structure is to let $p(Z_n|\zeta)$ be a \textit{parametric} exponential family model. By defining the mean/natural parameters for the model, one can handle correlations to various degrees. For example, $Z_n$ may follow a log-normal distribution~\cite{Lafferty06}, where correlations are modeled through a covariance matrix. However, exponential family models~\cite{Wainwright08} can be rigid, since the number of free parameters is fixed for a certain $K$. To get a more flexible model, it is tempting to consider higher-order moments $\bbE[Z_n^{\otimes M}]$ for a large $M$ up to $K$, where $u^{\otimes M}$ denotes an M-th order outer product of a vector $u$. but in this case the number of free parameters increases exponentially, leading to intractable inference.

In this paper, we use an alternative \textit{Bayesian nonparametric} method to model $Z_n$ as an outcome of random functions, which can handle complex correlations even when $K$ and $M$ go to infinity. Moreover, those random functions can be learned efficiently through inference/decoder networks via amortized variational inference~\cite{Kingma13}. In principle, arbitrarily complex neural networks can be applied to model correlations in our setting.

\begin{figure}
    \centering
    \includegraphics[width=\linewidth]{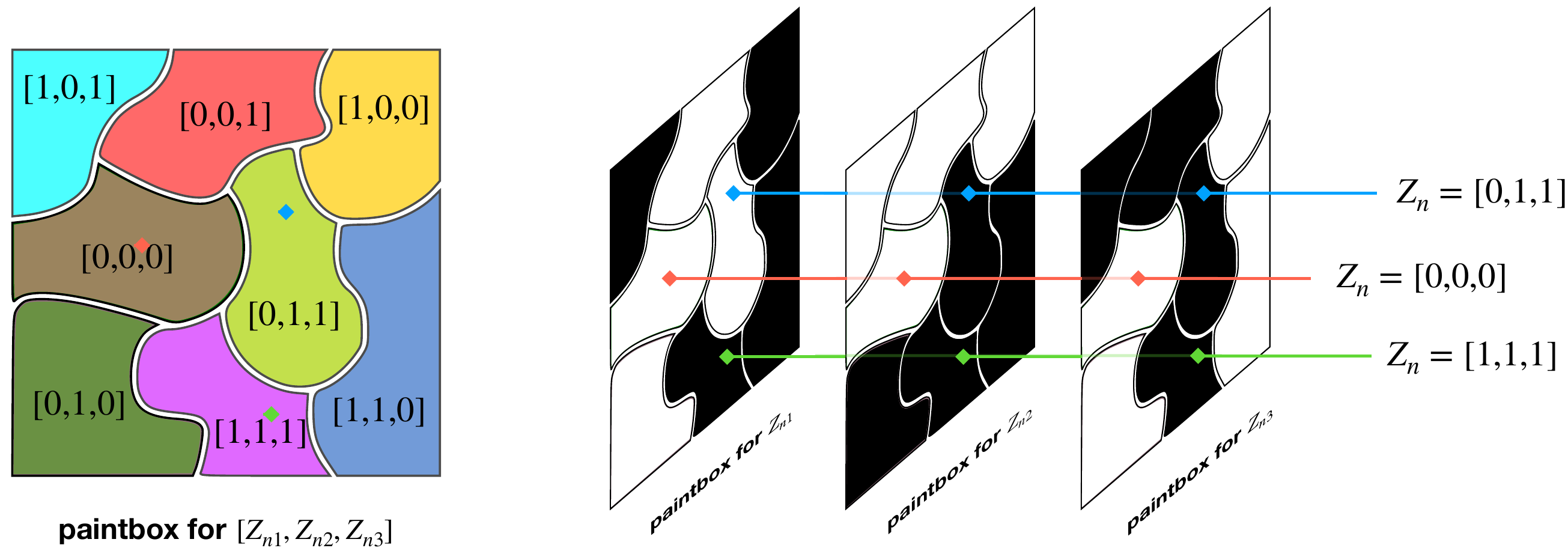}
    \caption{Example of two equivalent representations: (left) Paintbox model for $[Z_{n1},Z_{n2},Z_{n3}]$ by partitioning a unit square into eight regions, one for each distinct value. (right) Factorize the paintbox into three feature paintboxes, one each for a latent feature. Three examples of $u_n$ that determine $Z_n$ are demonstrated as dots in the partition paintbox, and as lines across feature paintboxes.}
    \label{fig:paintbox_toy}
\end{figure}

To give intuition why random function priors are powerful, we first show in Figure \ref{fig:paintbox_toy} an existing feature paintbox model for binary $Z_n$ that illustrates how to model arbitrarily complex correlations using binary random functions~\cite{Broderick13}. For simplicity, let $K=3$. First, select a compact set $S$ in Euclidean space, on which we can define a uniform distribution. For example let $S=[0,1]^2$. Then randomly partition $S$ into eight regions. Each partition represents a possible value for $Z_n=[Z_{n1},Z_{n2},Z_{n3}]$, as shown in Figure \ref{fig:paintbox_toy}. Given the partition, we uniformly sample a point $u_n\sim U(S)$ and assign $Z_n$ be the value defined by the region in which $u_n$ falls. Thus, we translate the problem of modeling distributions on $Z_n$ to modeling \textit{the random partition of $S$}. Following the classic analogy, we call this a partition paintbox model~\cite{Kingman78,Pitman06}. One can further factorize the partition paintbox into ``feature paintboxes"~\cite{Broderick13}. According to Figure \ref{fig:paintbox_toy}, each feature paintbox for the $k$-th feature is randomly partitioned into two regions denoted as $S_k$ (black) and $S_k^c$ (white). Let $Z_{nk}=\mathbbm{1}(u_n\in S_k)$. One can check that the feature paintbox model is the equivalent to the partition paintbox model for arbitrary finite $K$. (Note that here $Z_n$ is a random indicator function.)

The feature paintbox model is redundant but flexible. The arbitrary order moment $\bbE[\prod_{k\in \cJ} Z_{nk}]=\bbE[\text{vol}(\cap_{k\subset \cJ} S_k)]$ for any $\cJ\subset[K]$ can be modeled once we have enough freedom for $S_k$. We summarize the generative process for the feature paintbox model in Algorithm \ref{alg:feature_paintbox} for arbitrary $K$, including $K=\infty$.

We propose a model that can be treated as a generalization of the feature paintbox model from binary to non-negative $Z$ according to a function $Z_{nk}=f_n(\vartheta_k)$. There are two key differences between our model and the feature paintbox model. First, we use data-specific \textit{random functions} $f_n$, instead of points $u_n$, to represent each observation. Second, we use points $\vartheta_k$ from a \textit{Poisson process}, instead of $S_k$, to index each latent feature. A nice property of our model compared to the paintbox model is that we can use deep learning to model $f_n$ through inference and decoder networks~\cite{Kingma13}, allowing for efficient amortized variational inference.

In what follows, Section~\ref{sec:motivation} sets up the problem of modling $Z$ from a random matrix point of view. In Section~\ref{sec:model}, we embed $Z$ as a random measure and derive the functional form of $Z_{nk}=f_n(\vartheta_k)$ through a representation theorem. In Section~\ref{sec:topic_model}, we present a concrete example for Bayesian nonparametric topic modeling together with its amortized variational inference algorithm, and show empirical results in Section~\ref{sec:exp}. Finally, we discuss related work in Section~\ref{sec:discussions} and conclude in Section~\ref{sec:conclusion}.

\begin{algorithm}[t]
\caption{Feature paintboxes model}\label{alg:feature_paintbox}
\begin{algorithmic}[1]
\FOR {$k\in[K]$}
\STATE {Generate a random subset $S_k\subset S$.}
\ENDFOR
\STATE {Guarantee that $\sum_{k\in[K]}\text{vol}(S_k)<\infty$ almost surely.}
\FOR {$n=1,2,\ldots$}
\STATE {Independently generate $u_n\sim U(S)$.}
\STATE {Let $Z_n\!=\![Z_{n1},\ldots,Z_{nK}]$. Set $Z_{nk}\!=\!\mathbbm{1}(u_n\in S_k)$.}
\ENDFOR
\end{algorithmic}
\end{algorithm}

\section{$Z$ as a random matrix?}
\label{sec:motivation}
We will rely on representation theorems to derive the functional form of our models. This usually works out by finding an infinite dimensional random object paired with an exchanegability assumption on that random object. The choice of random objects is the key step, and we will see below that it can be hard to derive an interesting model when choosing a bad random object.

Consider modeling $Z$ as a random matrix. Equation (\ref{eq:representation_rows}) above is one example that derives a mixture representation by assuming row exchangeability of $Z$. However, Equation (\ref{eq:representation_rows}) is uninformative in that, first, it does not tell us what random object $\zeta$ is, and second, it does not determine the connection between $Z_n$ and $\zeta$ through $p(Z_n|\zeta)$. Our discussion in Section~\ref{sec:intro} will show that this provides too much freedom to choose $\zeta$ and $p(Z_n|\zeta)$. 

We further restrict $Z$ by assuming it is \textit{column exchangeable} as well. This requires allowing both $N$ and $K$ to equal infinity. We call $Z$ \textit{separately exchangeable} if it is both row and column exchangeable. Once $K=\infty$, we need to guarantee series convergence for rows. That is, $\sum_{k\in\bbN}Z_{nk}<\infty$ with probability $1$, for any $n\in\bbN$. Row sum convergence is always considered necessary. (For example, in a topic model we want to normalize $Z_n$.) However, the following proposition says that when $Z$ is separately exchangeable, we will get an empty model even for a binary $Z$.

\begin{figure*}[t]
\centering
\includegraphics[width=1\linewidth]{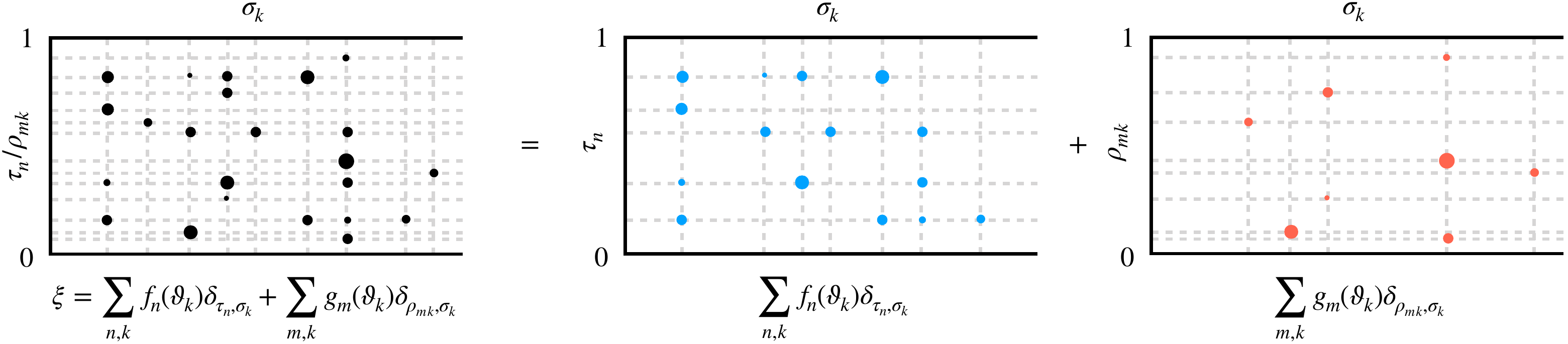}
\caption{Decoupling a separately exchangeable discrete random measure $\xi$ into two parts.}
\label{fig:slfm}
\end{figure*}

\begin{proposition}
An infinite binary matrix $Z$ (i) is separately exchangeable, and (ii) has finite row sums almost surely, if and only if $Z=\bf0$ almost surely.
\label{prop:empty_binary_matrix}
\end{proposition}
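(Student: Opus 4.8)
The plan is to prove the two implications separately, with essentially all of the content living in the forward direction. The reverse implication is immediate: if $Z=\mathbf{0}$ almost surely, then $Z$ is trivially invariant under any finite permutation of rows and of columns, and every row sum equals $0<\infty$, so both (i) and (ii) hold.

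For the forward direction, the idea is to forget the full matrix and analyze a single row at a time. Fix a row index $n$ and consider the binary sequence $(Z_{nk})_{k\in\bbN}$. Column exchangeability of $Z$ (which is part of separate exchangeability in (i)) implies that this one sequence is itself exchangeable, since a finite permutation of columns permutes the entries within the row while preserving the joint law, hence preserves the marginal law of the row. By de Finetti's theorem for binary exchangeable sequences, there is a $[0,1]$-valued random variable $p_n$ --- recoverable as the almost-sure limit of the empirical frequencies $\frac{1}{K}\sum_{k=1}^K Z_{nk}$ --- such that, conditionally on $p_n$, the entries $(Z_{nk})_k$ are i.i.d.\ Bernoulli$(p_n)$.

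The key step is to connect this mixing parameter to the finite-row-sum hypothesis (ii). On the event $\{p_n>0\}$, the entries of row $n$ are, conditionally, i.i.d.\ Bernoulli with strictly positive success probability, so by the second Borel--Cantelli lemma infinitely many of them equal $1$ and the row sum $\sum_k Z_{nk}$ is infinite almost surely. Thus $\{\sum_k Z_{nk}<\infty\}$ is contained, up to a null set, in $\{p_n=0\}$, and assumption (ii) forces $P(p_n=0)=1$. But on $\{p_n=0\}$ the conditional law of the row is i.i.d.\ Bernoulli$(0)$, i.e.\ the row is identically zero, so row $n$ vanishes almost surely. Taking a countable union over $n$ and using subadditivity yields $Z=\mathbf{0}$ almost surely.

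I expect the only delicate point to be the justification that finite row sums pin the de Finetti parameter at $0$ --- the conditional Borel--Cantelli argument --- together with the measurability bookkeeping needed to pass from ``each fixed row is zero almost surely'' to ``the whole matrix is zero almost surely.'' Everything else (the reduction to a single row and the invocation of de Finetti) is routine once column exchangeability is observed; indeed, the argument shows that \emph{column} exchangeability alone, even without row exchangeability, already suffices for this direction.
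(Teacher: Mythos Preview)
Your argument is correct. The reverse implication is trivial as you note, and for the forward direction your use of de Finetti on a single row, together with the conditional Borel--Cantelli (or simply the strong law of large numbers) to force the mixing parameter $p_n$ to be zero, is clean and fully rigorous. The passage from ``each row vanishes almost surely'' to ``the matrix vanishes almost surely'' via a countable union is exactly right.

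However, your route is genuinely different from the paper's. The paper invokes the Aldous--Hoover representation for separately exchangeable arrays: any such binary array can be written as $Z_{nk}=\mathbbm{1}(R_{nk}<W(U_n,V_k))$ for a random measurable $W:[0,1]^2\to[0,1]$ and i.i.d.\ uniforms $U_n,V_k,R_{nk}$, and then argues that finite row sums force $\int W=0$, hence $W=0$ almost everywhere and $Z=\mathbf{0}$. This situates the result inside the graphon framework, which matches the paper's broader narrative about representation theorems for exchangeable structures, but it appeals to a substantially deeper theorem than is actually needed. Your proof is more elementary --- it uses only the classical de Finetti theorem for binary sequences --- and, as you correctly observe, it shows that column exchangeability alone already suffices, a strengthening the graphon argument does not yield since Aldous--Hoover requires exchangeability in both indices.
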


\begin{proof}[Proof (Sketch)]
One can prove that $Z$ is a graphon model if it is separately exchangeable~\cite{Hoover79,Aldous85,Orbanz15}. A graphon model satisfies finite row sums if and only if $Z=\bf0$.
\end{proof}

When choosing a bad random object, one can either get a vacuous or an empty model through representation theorems. In the next section, we fix this problem by introducing a nice random object $\xi$ generated by embedding $Z$ as a random measure. Then we apply representation theorems on $\xi$.

\section{$Z$ as a random measure}
\label{sec:model}

\subsection{Population random measure embedding}
In this section, we embed the random matrix $Z$ as a \textit{discrete random measure} $\xi=\sum_{n,k}Z_{nk}\delta_{\tau_n,\sigma_k}$ on an infinite strip $[0,1]\times\bbR_+$, where $(\tau_n)_{n\in\bbN}\subset[0,1]$ distinguishes objects and $(\sigma_k)_{k\in\bbN}\subset\bbR_+$ distinguishes latent features. Both $(\tau_n)_{n\in\bbN}$ and $(\sigma_k)_{_{k\in\bbN}}$ are random as well, and are not necessarily ordered. Note that $\xi$ preserves the matrix structure as demonstrated in Figure \ref{fig:slfm}; the intersection points of horizontal/vertical dashed lines indexed by $(\tau_n)_{n\in\bbN}$ and $(\sigma_k)_{k\in\bbN}$ form an ``equivalent class" of matrix $Z$ up to a re-ordering of rows and columns. The infinite strip is an abstract space introduced solely for applying representation theorems.

Next, we assume $\xi$ is separately exchangeable. That is, $\xi(\cT_1(A)\times\cT_2(B))=_d\xi(A\times B)$ for any measure-preserving transformations $\cT_1$, $\cT_2$ on $[0,1]$ and $\bbR_+$ separately for arbitrary Borel sets $A,B$. Even though the notion of separate exchangeability is different for $\xi$ than for random matrix $Z$, they are conceptually similar, since interchanging row/column indices will not affect the joint distribution. It turns out that we can represent $\xi$ precisely as follows:

\begin{proposition}
A discrete random measure $\xi$ on $[0,1]\times\bbR_+$ is separately exchangeable if and only if
\begin{align}
\xi=\sum_{n,k}f_n(\vartheta_k)\delta_{\tau_n,\sigma_k} + \sum_{m,k}g_m(\vartheta_k)\delta_{\rho_{mk},\sigma_k},
\label{eq:strip_representation}
\end{align}
 almost surely for some random measurable functions $f_n,g_m\geq 0$ on $\bbR_+^2$, a unit rate Poisson process $\{(\vartheta_k,\sigma_k)\}$ on $\bbR_+^2$, and independent $U(0,1)$ arrays $(\tau_n)$ and $(\rho_{mk})$.
\label{prop:representation}
\end{proposition}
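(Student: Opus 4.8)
The plan is to recognize this as the discrete specialization of the Aldous--Hoover--Kallenberg representation for separately exchangeable random measures on a product of a probability space and an infinite-measure space, here $[0,1]$ and $\bbR_+$. The two directions are very different in character: the ``if'' direction is a direct invariance check, whereas the ``only if'' direction carries all the weight and proceeds by invoking the general representation theorem and then using discreteness together with local finiteness to eliminate all but the two displayed families of terms.

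For sufficiency I would verify separate exchangeability of the right-hand side of \eqref{eq:strip_representation} directly. For a measure-preserving map $\cT_2$ of $\bbR_+$, the map $(\vartheta,\sigma)\mapsto(\vartheta,\cT_2\sigma)$ is measure preserving on $\bbR_+^2$, so it leaves the law of the unit-rate Poisson process $\{(\vartheta_k,\sigma_k)\}$ invariant; since every weight $f_n(\vartheta_k)$ and $g_m(\vartheta_k)$ depends only on the untouched coordinate $\vartheta_k$, the whole measure is invariant in distribution under $\cT_2$. For a measure-preserving map $\cT_1$ of $[0,1]$, the i.i.d.\ $U(0,1)$ arrays $(\tau_n)$ and $(\rho_{mk})$ satisfy $(\cT_1\tau_n)=_d(\tau_n)$ and $(\cT_1\rho_{mk})=_d(\rho_{mk})$, and the weights are untouched, giving invariance under $\cT_1$. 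Combining the two yields separate exchangeability.

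For necessity I would start from the general representation of separately exchangeable random measures, which writes $\xi$ as a superposition of finitely many families of terms built from independent Poisson processes and independent uniform randomizations, possibly including diffuse (Lebesgue-type) components. First I would discard every diffuse component, since $\xi$ is assumed discrete. The heart of the argument is then a structural dichotomy forced by the asymmetry of the two factors. Exchangeability over the infinite space $\bbR_+$ together with local finiteness forces the feature (column) locations to be a single shared unit-rate Poisson process $\{\sigma_k\}$, carrying an auxiliary mark $\vartheta_k$ so that $\{(\vartheta_k,\sigma_k)\}$ is Poisson on $\bbR_+^2$: a competing ``row-shared, feature-fresh'' family, in which countably many shared rows each spawn their own independent Poisson process on $\bbR_+$, would deposit infinite mass on every strip $[0,1]\times[0,t]$ and is therefore ruled out. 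Conditioned on a feature location $\sigma_k$, exchangeability over the probability space $[0,1]$ is a de Finetti situation, so the rows carrying atoms in that column are labelled by i.i.d.\ $U(0,1)$ variables; these labels are either the globally shared grid labels $(\tau_n)$, giving the first sum with weights $f_n(\vartheta_k)$, or column-specific fresh labels $(\rho_{mk})$, giving the second sum with weights $g_m(\vartheta_k)$. Finally I would read off the weights as random functions of the column mark, packaging the randomness attached to row $n$ (resp.\ the $m$-th fresh atom) into the random function $f_n$ (resp.\ $g_m$) evaluated at $\vartheta_k$.

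The hard part will be the necessity direction, and within it two points deserve care. The first is the clean invocation and specialization of the general separately-exchangeable representation, and in particular the local-finiteness argument that excludes the transpose (``row-shared, feature-fresh'') family while keeping its counterpart; this asymmetry is exactly what makes $[0,1]$ and $\bbR_+$ play different roles and is easy to get wrong. The second is the measurable reconstruction of the coupling between the Poisson marks $\vartheta_k$ and the per-row functions $f_n,g_m$, i.e.\ showing that the cell weights can indeed be encoded as $f_n(\vartheta_k)$ and $g_m(\vartheta_k)$ for jointly measurable random functions rather than requiring additional per-cell randomization; this is where the detailed bookkeeping of the representation theorem is needed.
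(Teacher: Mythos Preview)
Your approach is correct and at bottom the same as the paper's: both directions ultimately rest on Kallenberg's representation theorem for separately exchangeable random measures on $[0,1]\times\bbR_+$, with discreteness used to strip out the non-atomic pieces. Your sufficiency argument is fine and in fact more explicit than the paper, which simply absorbs both directions into the ``if and only if'' of the cited theorem.

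Where you diverge is in the amount of work you plan to do for necessity. The paper invokes the \emph{specific} Kallenberg theorem for $[0,1]\times\bbR_+$, which already delivers a five-term decomposition: the two point-mass families in \eqref{eq:strip_representation}, two line-measure families $\sum_k h_k(\vartheta_k)(\lambda\otimes\delta_{\sigma_k})$ and $\sum_n\beta_n(\delta_{\tau_n}\otimes\lambda)$, and a diffuse term $\gamma\lambda^2$. Discreteness of $\xi$ kills the last three in one line, and the proof is over. The asymmetry between the probability factor $[0,1]$ and the infinite-measure factor $\bbR_+$, the absence of a ``transpose'' family, and the packaging of cell weights as $f_n(\vartheta_k)$ and $g_m(\vartheta_k)$ are all already built into that theorem; you do not need to re-derive them via local-finiteness arguments or measurable-selection bookkeeping. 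So the two ``hard parts'' you flag are not hard in the paper's route---they are subsumed in the citation. Your plan would effectively reprove a slice of Kallenberg's theorem, which is legitimate but unnecessary here.
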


\begin{proof}
This follows from the general representation theorem for separately exchangeable random measures on $[0,1]\times\bbR_+$~\cite{Kallenberg06} by removing the non-atomic parts. Details are given in the appendix.
\end{proof}

We briefly look at the two parts of this representation:
\begin{itemize}
\item[1.] $\sum_{n,k}f_n(\vartheta_k)\delta_{\tau_n,\sigma_k}$: This is the part we are interested in. Correlations are learned through coupling of random functions $f_n$ with a Poisson process.
\item[2.] $\sum_{m,k}g_m(\vartheta_k)\delta_{\rho_{mk},\sigma_k}$: This part is less important since the double index in $\rho_{mk}$ means each row (object) slice $\xi(\{\rho_{mk}\},\cdot)$ contains at most one atom. We drop this part in our model.
\end{itemize}
Thus, we can represent $\xi=\sum_{n,k}f_n(\vartheta_k)\delta_{\tau_n,\sigma_k}$ as a coupling of a 2d Poisson process $(\vartheta_k,\sigma_k)$ and random functions $f_n$. As mentioned in Section~\ref{sec:intro}, we derive $Z_{nk}=f_n(\vartheta_k)$. Since we model the entire population through $Z$ by a random measure embedding, we call our model \textit{population random measure embedding} (PRME). 

\subsection{Construction via completely random measures}
\label{subsec:crm}
Once we have a representation for $Z_{nk}$, we still need to guarantee series convergence $\sum_k Z_{nk}=\sum_k f_n(\vartheta_k)<\infty$. This is not obvious, since $\vartheta_k$ spans uniformly on $\bbR_+$. One remedy is to introduce a transformation $\widetilde{\vartheta}_k=T(\vartheta_k)$ that maps almost every $\widetilde{\vartheta}_k$ close to zero, leaving only finite number of $\widetilde{\vartheta}_k$ above any positive threshold.  The method to introduce such a transformation $T$ is via completely random measures (CRM)~\cite{Kingman67}. In the appendix, we show the construction of $T$ via CRMs. In addition, we show that the well-known Indian buffet process~\cite{Ghahramani06,Griffiths11}, its extensions~\cite{Teh09}, hierarchical Dirichlet processes (HDP)~\cite{Teh05} and the discrete infinite logistic normal distribution (DILN)~\cite{Paisley12} are instances of population random measure embeddings. However, these models have restrictions in their model capacity. For example,~\cite{Paisley12} relies on a linear kernel to model correlations and there is no obvious extension to complex kernels. As we will show, a PRME can be more flexible by using nonlinear object-specific functions $f_n$ such as deep neural networks.

\section{An illustration on topic modeling}
\label{sec:topic_model}
\subsection{The model}
In a topic model, we use $Z_n$ to represents an un-normalized discrete distribution over topics, where $Z_{nk}$ is the strength of topic $k$ for document $n$. We use a PRME to model $Z_{nk}$, with the following construction,
\vspace{-3pt}
\begin{align}
&Z_{nk}\sim\text{Gamma}(\beta p_k,\exp(f(h_n,\ell_k))),\nonumber\\
&p_k=V_k\prod_{k'=1}^{k-1}(1-V_{k'}),\quad V_k\sim\text{Beta}(1,\alpha),\nonumber\\
&h_n\sim\cN(0,aI),\quad\ell_k\sim\cN(0,bI),\nonumber\\
&f(h_n,\ell_k)\sim\cN(\mu_f(h_n,\ell_k),\sigma^2_f(h_n,\ell_k)).\vspace{-3pt}
\label{eq:topic_model_z}
\end{align}
We now explain how Equation (\ref{eq:topic_model_z}) relates to the original PRME equation $Z_{nk}=f_n(\vartheta_k)$, via four steps.
\begin{itemize}
\item[1.] \fbox{$f_n(\vartheta_k)~\rightarrow~ f(h_n,\vartheta_k)$}\\
We use a parametric function $f(h_n,\cdot)$ to represent $f_n(\cdot)$, where $f$ is a random function, and $h_n$ is an observation-specific random vector. This decomposition is necessary, since we model $f$ as a normal distribution parameterized by \textit{decoder networks} $\mu_f,\sigma^2_f$, and $h_n$ as the output of an \textit{inference network}.

\item[2.] \fbox{$f(h_n,\vartheta_k)~\rightarrow~f(h_n,\widetilde{\vartheta}_k)$}\\ 
We transform $\widetilde{\vartheta}_k=T(\vartheta_k)$ by transforming the original Poisson process $(\theta_k,\sigma_k)$ to a hierarchical Gamma process~\cite{Teh05,Wang11}. Then we use a stick-breaking construction over $\widetilde{\vartheta}_k$~\cite{Sethuraman94}, where $\widetilde{\vartheta}_k\sim\text{Gamma}(\beta p_k,1)$. $\beta$ is a hyperparameter and $p_k$ is generated by the second line of Equation (\ref{eq:topic_model_z}).

\item[3.] \fbox{$f(h_n,\widetilde{\vartheta}_k)~\rightarrow~f(h_n,\widetilde{\vartheta}_k,\ell_k)$}\\ 
We augment $\widetilde{\vartheta}_k$ to $(\widetilde{\vartheta}_k,\ell_k)$ to introduce extra randomness via $\ell_k$. This operation is equivalent to augmenting the original 2d Poisson process $(\theta_k,\sigma_k)$ to a higher dimensional Poisson process $(\theta_k,\sigma_k,\ell_k)$.

\item[4.] \fbox{$f(h_n,\widetilde{\vartheta}_k,\ell_k)~\rightarrow~\widetilde{\vartheta}_k\cdot\exp(f(h_n,\ell_k))$}\\ We represent $f(h_n,\widetilde{\vartheta}_k,\ell_k)$ as $\widetilde{\vartheta}_k\cdot\exp(f(h_n,\ell_k))$ and assign priors for $h_n$ and $\ell_k$ (line 3 in Equation (\ref{eq:topic_model_z})). We get Equation (\ref{eq:topic_model_z}) by absorbing $\exp(f(h_n,\ell_k))$ into the Gamma scale parameter.
\end{itemize}

In our construction, series convergence $\sum_{k=1}^\infty Z_{nk}<\infty$ can be achieved by bounding $\mu_f$ and $\sigma^2_f$ through a truncation layer in the decoder network.
Given $Z_n$, we sample words in a document, $X_{nm}$ for $m\in [M_n]$, by first sampling its topic assignment $C_{nm}\!\sim\!\text{Disc}(\frac{Z_{n\cdot}}{\sum_k{Z_{nk}}}),$ and then sampling the word from that topic, $X_{nm}\!\sim\!\text{Disc}(\theta_{C_{nm}})$, with topic prior $\theta_k\!\sim\!\text{Dir}(\gamma_0).$ We recall that in topic models, $\theta_k$ (topic $k$) is a discrete distribution over the vocabulary.

\begin{figure*}[ht!]
\centering
\subfigure[]{\includegraphics[width=0.35\linewidth]{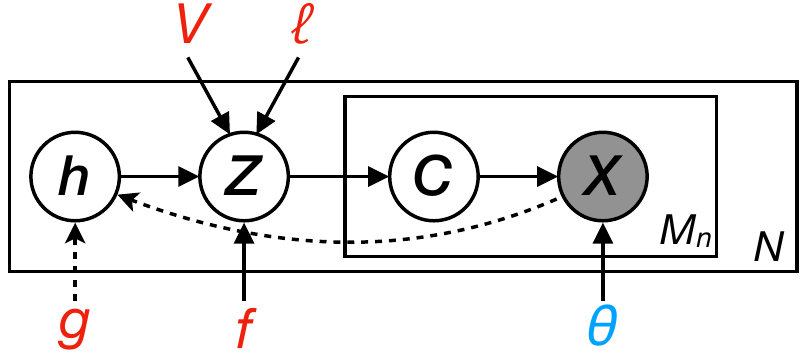}\label{fig:dag}}\qquad
\subfigure[]{\includegraphics[width=.6\linewidth]{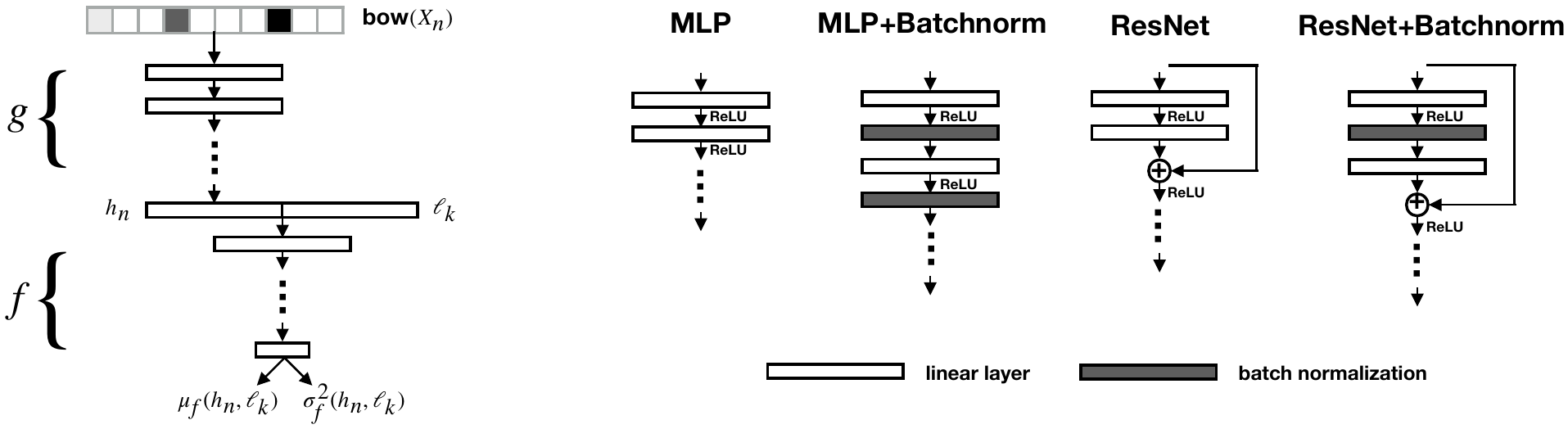}\label{fig:network_architecture_detail}}\vspace{-10pt}
\caption{(a) Graphical representation of our proposed model. Solid arrows represent the generative process and dashed arrows show the VAE part of the posterior. We organize local parameters that belong to a document/word into boxes and remove all sub-indices. We use stochastic natural gradient ascent for $\theta$ and use stochastic gradient ascent for $[\ell,V,g,f]$ (b) Left: The architecture we used in our experiments. Right: Various layer designs.}
\end{figure*}

\subsection{Amortized variational inference}
Assume we have $N$ documents and the posterior is truncated to $K$ topics. The joint likelihood is
\vspace{-8pt}
\begin{align}
&\hspace{-5pt}p(\ell,V,\theta,h,Z,C,X)=\prod_{k=1}^K p(\ell_k)p(V_k)p(\theta_k)\!\prod_{n=1}^N\!\Bigg[ p(h_n)\nonumber\\
&\hspace{-8pt}\prod_{k=1}^K\!p(Z_{nk}|V,h_n,\ell_k)\!\!\prod_{m=1}^{M_n}\!p(C_{nm}|Z_n)p(X_{nm}|C_{nm},\theta)\!\Bigg].\vspace{-8pt}
\end{align}
We use variational inference to approximate the model posterior by optimizing the variational objective function
\begin{align}
\max_q~\cL=\max_q~\bbE_q\Big[\ln\frac{p(\ell,V,\theta,h,Z,C,X)}{q(\ell,V,\theta,h,Z,C)}\Big],
\end{align}
where we restrict $q$ to the factorized family
\vspace{-8pt}
\begin{align}
q(\ell,V,\theta,h,Z,C)=&\prod_{k=1}^K q(\ell_k)q(V_k)q(\theta_k)\prod_{n=1}^N\Bigg[q(h_n|X_n)\nonumber\\
&\prod_{k=1}^K q(Z_{nk})\prod_{m=1}^{M_n}q(C_{nm})\Bigg].
\vspace{-8pt}
\end{align}
Further, for global variables we let
\begin{align}
q(\ell_k)=\delta_{\widehat{\ell}_k},\quad q(V_k)=\delta_{\widehat{V}_k},\quad q(\theta_k)=\text{Dir}(\gamma_k).
\end{align}
For local variables, we introduce an inference network $g$ and let $q(h_n|X_n)=\delta_{g(X_n)}$. For the remaining variables
\begin{align}
&\hspace{-10pt}q(Z_{nk})=\text{Gam}(a_{nk},b_{nk}),~ q(C_{nm})=\text{Disc}(\phi_{nm}).
\end{align}
We use coordinate ascent to update $q$. Each of these updates is guaranteed to improve the objective when the gradient descent step size is small enough~\cite{Nesterov13}. More details are given in the appendix.

For $q(Z_{nk})$, we maximize a lower bound for $\cL$ similar to~\citet{Paisley12}, giving updates
\begin{align}
a_{nk}&=\beta\widehat{p}_k+\sum_{m=1}^{M_n}\phi_{nm}(k),\nonumber\\ b_{nk}&=1/\Big(\bbE\Big[\exp(-f(h_n,\ell_k))\Big]+\frac{M_n}{\varepsilon_n}\Big),
\label{eq:update_z}
\end{align}
where $\varepsilon_n=\sum_{k=1}^K \bbE[Z_{nk}]$.

For $q(C_{nm})$ and $q(\theta)$, we have respective updates
\begin{align}
\phi_{nm}(k)&\propto\exp\Big(\bbE[\ln\theta_{k,X_{nm}}]+\bbE[\ln Z_{nk}]\Big),\label{eq:update_c}\\
\gamma_{kd}&=\gamma_0+\sum_{n=1}^N\sum_{m=1}^{M_n}\phi_{nm}(k)\cdot\mathbbm{1}(X_{nm}=d).
\label{eq:update_eta}
\end{align}


For $[\ell,V,g,f]$, we do gradient ascent on $\cL$. Batch variational inference can be done via coordinate ascent by iteratively updating the above variables. Dependencies among variables are shown in Figure \ref{fig:dag}.


For stochastic inference, in each global iteration we sample a subset $N_t\subset[N]$ and compute the noisy variational objective
\begin{align}
\cL_t&\!=\!\bbE\Big[\ln p(\ell,V,\theta)\Big]\!+\!\frac{N}{|N_t|}\sum_{n\in N_t}\bbE\Big[\ln p(h_n,Z_n,C_n,X_n)\Big]\nonumber\\ &+\bbH\Big[q(\theta)\Big]\!+\!\frac{N}{|N_t|}\sum_{n\in N_t}\bbH\Big[q(Z_n,C_n)\Big].
\end{align}

Optimizing local variables $Z,C$ can be done via closed-form updates exactly as in the batch case. For the other parameters we use stochastic gradient methods. Let $\rho^{(t)}\propto (t_0+t)^{-\kappa}$ be the step size with some constant $t_0$ and $\kappa\in(0.5,1]$. We apply the stochastic natural gradient method~\cite{Hoffman13} for $\theta$
\begin{align}
\widetilde{\gamma}_{kd}^{(t)}&=\gamma_0+\sum_{n\in N_t}\sum_{m=1}^{M_n}\phi_{nm}(k)\cdot\mathbbm{1}(X_{nm}=d),\nonumber\\ \gamma_{kd}^{(t)}&=(1-\rho^{(t)})\gamma_{kd}^{(t-1)}+\rho^{(t)}\widetilde{\gamma}_{kd}^{(t)}.
\label{eq:stochastic_update_eta}
\end{align}
and stochastic gradient method for the rest,
\begin{align}
[\ell,V,g,f]^{(t)} = [\ell,V,g,f]^{(t-1)}\!+\!\rho^{(t)}\nabla_{[\ell,V,g,f]}\cL_t.
\label{eq:stochastic_update_rest}
\end{align}
Since in each iteration we only do one gradient step, the cost is low. Note that through the variational autoencoder (VAE)~\cite{Kingma13} we transfer local updates for $h_n$ to global update for $g$, which will significantly speed-up inference. We summarize the stochastic inference algorithm in Algorithm \ref{alg:stochastic_inference}.

\begin{algorithm}[t]
\caption{Stochastic inference algorithm}\label{alg:stochastic_inference}
\begin{algorithmic}[1]
\FOR {$t=1,2,\ldots$}
\STATE {Sample a subset $N_t\subset[N]$}
\STATE {\textbf{\underline{Update local variables}}}
\WHILE {not converge}
\STATE {Closed-form update $q(Z_n)$ for $n\in N_t$. ~ Eq.~(\ref{eq:update_z})}
\STATE {Closed-form update $q(C_n)$ for $n\in N_t$. \, Eq.~(\ref{eq:update_c})}
\ENDWHILE
\STATE {\textbf{\underline{Update global variables}}}
\STATE {Noisy natural gradient step for $q(\theta)$.\hfill Eq.~(\ref{eq:stochastic_update_eta})}
\STATE {Noisy gradient step for $\ell,V,g,f$. ~~~~~~~~~~~~~~ Eq.~(\ref{eq:stochastic_update_rest})}
\ENDFOR
\end{algorithmic}
\end{algorithm}

\begin{table*}[t]
\centering 
\caption{Perplexity result for text data sets with different dictionary sparsity levels controlled by $\gamma_0$.}\vspace{3pt}
\label{tab:batch_text_various_gamma}
\resizebox{\linewidth}{!}{
  \begin{tabular}{ |l|c c c c|c c c c|c c c c| }
    \hline
    \multirow{2}{*}{\textbf{Model}} & \multicolumn{4}{c|}{\textbf{New York Times}} & \multicolumn{4}{c|}{\textbf{20Newsgroups}} & \multicolumn{4}{c|}{\textbf{NeurIPS}}\\
    & $\gamma_0\!=\!0.2$ & $\gamma_0\!=\!0.4$ & $\gamma_0\!=\!0.6$ & $\gamma_0\!=\!0.8$ & $\gamma_0\!=\!0.2$ & $\gamma_0\!=\!0.4$ & $\gamma_0\!=\!0.6$ & $\gamma_0\!=\!0.8$ & $\gamma_0\!=\!0.2$ & $\gamma_0\!=\!0.4$ & $\gamma_0\!=\!0.6$ & $\gamma_0\!=\!0.8$\\\hline
    HDP & 2436.51 & 2464.74 & 2482.61 & 2501.82 & 5317.68 & 5845.90 & 6294.68 & 6665.68 & 1973.39 & 1962.90 & 1981.83 & 2009.58 \\
    DILN & 2231.16 & 2295.12 & 2418.16 & 2509.24 & 5164.93 & 5732.12 & 6143.64 & 6389.99 & 1853.89 & 1902.88 & 1944.90 & \textBF{1947.94} \\
    PRME & \textBF{2203.00} & \textBF{2247.25} & \textBF{2299.60} & \textBF{2338.38} & \textBF{5102.08} & \textBF{5531.04} & \textBF{5878.39} & \textBF{5975.12} & \textBF{1753.61} & \textBF{1850.37} & \textBF{1917.21} & 1953.85 \\\hline
  \end{tabular}}\vspace{-10pt}
\end{table*}

\subsection{Network architectures}
\label{subsec:network_architecture}
The flexibility of our model comes from the inference and decoder networks $g$ and $f$. As we show in the experiments, these allow us to learn complex non-linear ``paintboxes" in order to capture complex topic correlations. Since optimizing over deep neural networks is still a challenging problem in theory, we design our networks with architectures that work well in practice. Rather than directly applying multilayer perceptrons~\cite{Rumelhart85}, we instead use more complex layer designs such as batch normalization~\cite{Ioffe15} and deep residual networks (ResNet)~\cite{He16} to speed-up training. For inference network $g$, we use the bag-of-words representation of $X_n$ as the input feature. For decoder network $f$, we concatenate $h_n=g(X_n)$ and $\ell_k$ as inputs. Detailed architecture design is shown in Figure \ref{fig:network_architecture_detail}.

\section{Experiments}
\label{sec:exp}

\begin{table}[t]
\centering\vspace{-8pt}
\caption{Dataset description.}\vspace{3pt}
\label{tab:datasets}
\resizebox{.9\linewidth}{!}{
  \begin{tabular}{ |c|c c c c| }
    \hline
    \textbf{Corpus} & \# train & \# test & \# vocab & \# tokens\\\hline
    New York Times & 5,000 & 500 & 8,000 & 1.4M\\
    20Newsgroups & 11,269 & 7,505 & 53,975 & 2.2M\\
    NeurIPS & 2,183 & 300 & 14,086 & 3.3M\\\hline
  \end{tabular}}
\end{table}

\begin{table}[h!]
\centering\vspace{-8pt}
\caption{Network layer configurations for New York Times dataset.}\vspace{3pt}
\label{tab:network_layer_detail}
\resizebox{.9\linewidth}{!}{
  \begin{tabular}{ |c|c c| }
    \hline
    \textbf{Depth} & \textbf{Inference Network} & \textbf{Decoder Network}\\\hline
    \multirow{2}{*}{2 layers} & \multirow{2}{*}{$[8000\times d_h]$} & \multirow{2}{*}{\makecell{$[(d_h+d_\ell)\times80]$\\$[80\times2]$}} \\
    & & \\\hline
    \multirow{3}{*}{4 layers} & \multirow{3}{*}{\makecell{$[8000\times1000]$\\$[1000\times d_h]$}} & \multirow{3}{*}{\makecell{$[(d_h+d_\ell)\times80]$\\$[80\times80]$\\$[80\times2]$}} \\
    & & \\
    & & \\\hline
    \multirow{4}{*}{6 layers} & \multirow{4}{*}{\makecell{$[8000\times1000]$\\$[1000\times1000]$\\$[1000\times d_h]$}} & \multirow{4}{*}{\makecell{$[(d_h+d_\ell)\times80]$\\$[80\times80]$\\$[80\times80]$\\$[80\times2]$}} \\
    & & \\
    & & \\
    & & \\\hline
    \multirow{5}{*}{8 layers} & \multirow{5}{*}{\makecell{$[8000\times1000]$\\$[1000\times1000]$\\$[1000\times1000]$\\$[1000\times d_h]$}} & \multirow{5}{*}{\makecell{$[(d_h+d_\ell)\times80]$\\$[80\times80]$\\$[80\times80]$\\$[80\times80]$\\$[80\times2]$}} \\
    & & \\
    & & \\
    & & \\
    & & \\\hline
  \end{tabular}}
\end{table}

\begin{table}[t]
\centering\vspace{-8pt}
\caption{Perplexity result for various network depths.}\vspace{3pt}
\label{tab:network_architecture_depth}
\resizebox{1\linewidth}{!}{
  \begin{tabular}{ |c|c c c c| }
    \hline
    \textbf{Depth} & \textbf{MLP} & \textbf{MLP+BN} & \textbf{ResNet} & \textbf{ResNet+BN}\\\hline
    2 layers & 2325.84 & 2327.81 & N/A & N/A\\
    4 layers & 2228.62 & 2203.00 & 2214.02 & 2195.72\\
    6 layers & 2219.06 & \textBF{2184.44} & 2202.79 & 2194.74\\
    8 layers & \textBF{2196.35} & 2195.68 & \textBF{2199.07} & \textBF{2184.56}\\\hline
  \end{tabular}}\vspace{-10pt}
\end{table}

\begin{table}[t!]
\centering\vspace{-8pt}
\caption{Perplexity result for various size of $h_n$/$\ell_k$.}\vspace{3pt}
\label{tab:network_architecture_width}
\resizebox{1\linewidth}{!}{
  \begin{tabular}{ |c|c c c c| }
    \hline
    \textbf{Hidden Size} & \textbf{MLP} & \textbf{MLP+BN} & \textbf{ResNet} & \textbf{ResNet+BN}\\\hline
    $d_h\!=\!d_\ell\!=\!2$ & 2287.40 & 2258.97 & 2265.53 & 2256.84\\
    $d_h\!=\!d_\ell\!=\!5$ & 2245.43 & 2243.26 &2231.54 & 2225.64\\
    $d_h\!=\!d_\ell\!=\!10$ & \textBF{2220.82} & 2217.65 & 2227.04 & 2199.73\\
    $d_h\!=\!d_\ell\!=\!20$ & 2228.62 & \textBF{2203.00} & \textBF{2214.02} & \textBF{2195.72}\\\hline
  \end{tabular}}
\end{table}

\subsection{Batch experiments}
We show empirical results on three text datasets: a 5K subset of New York Times, 20Newsgroups, and NeurIPS. Their basic statistics are shown in Table~\ref{tab:datasets}. For each test document $X_n$, we do a $90\%/10\%$ split into training words $X_{n,TR}$ and testing words $X_{n,TS}$. The perplexity is calculated based on the prediction of $X_{n,TS}$ given the model and $X_{n,TR}$,
\begin{align}
\textstyle \text{perplexity}\!=\!\exp\!\Big(\!\!-\frac{\sum_{m\in X_{n,TS}}\ln p(X_{nm}|X_{n,TR})}{|X_{n,TS}|}\Big).
\label{eq:perplexity}
\end{align}
Lower perplexity means better predictive performance.

In Table~\ref{tab:batch_text_various_gamma}, we compare three Bayesian nonparametric models: hierarchical Dirichlet process (HDP)~\cite{Teh05}, discrete infinite logistic normal (DILN)~\cite{Paisley12}, and our population random measure embedding (PRME) using 4-layer MLP with batch normalization.\footnote{The number of layers includes inference network and decoder network. We ignore the last layer of the decoder network.} 
We tune $\gamma_0$ and fix the truncation level $K=100$ and set the $a=1,b=1,\alpha=1,\beta=5$ for fair comparisons. All gradient updates are done via Adam~\cite{Kingma14} with learning rate $10^{-4}$. As Table~\ref{tab:batch_text_various_gamma} shows, PRME consistently perform better than HDP and DILN. Where DILN was designed to outperform HDP by learning topic correlation structure, PRME improves upon DILN by learning a more complex kernel structure.

\begin{figure*}[t!]
\includegraphics[width=1\linewidth]{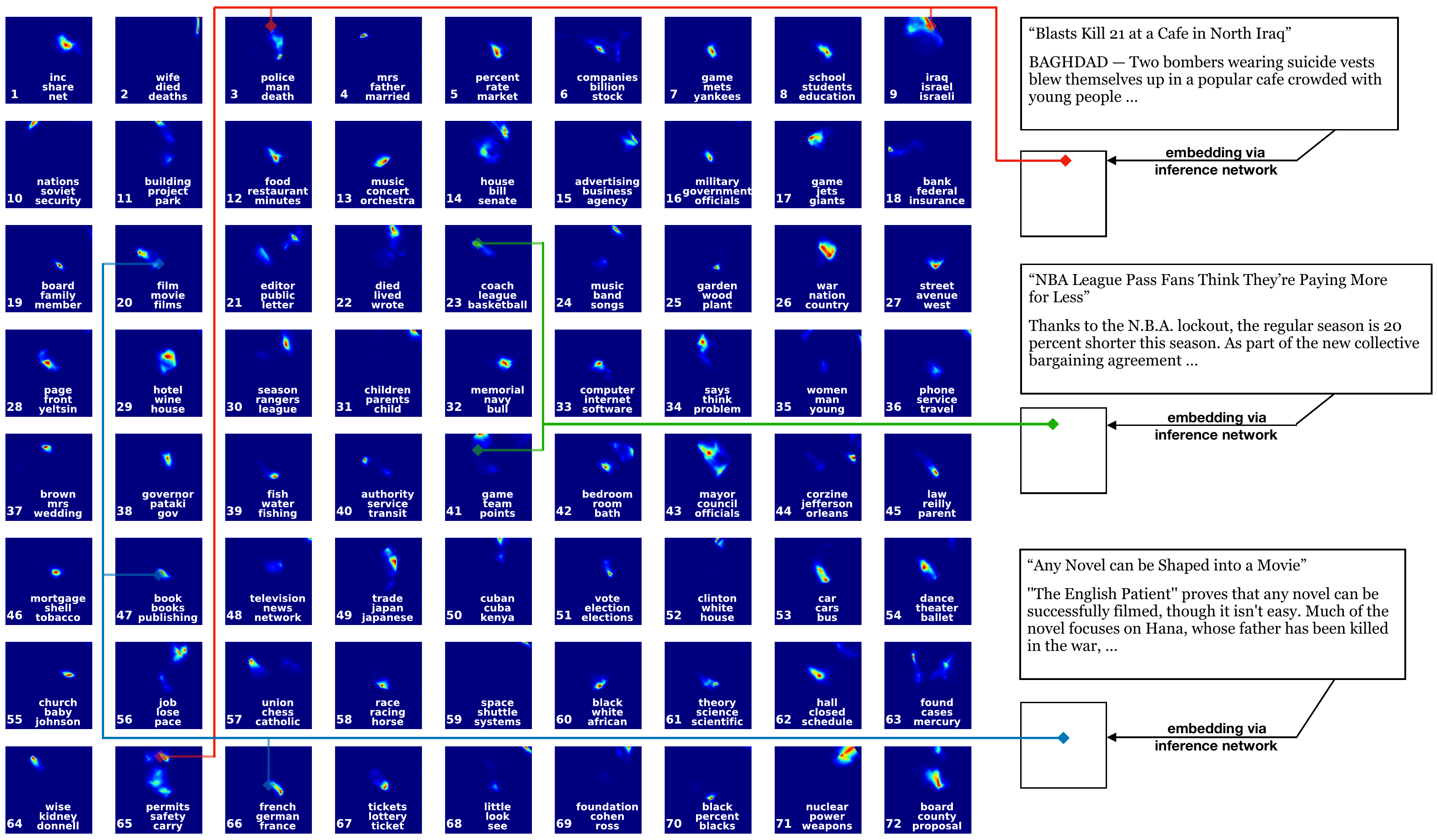}
\caption{A paintbox demonstration of salient topics learned from the one million New York Times dataset. In each paintbox on the LHS, pixel $(x,y)$ represents the topic strength $Z_{(x,y),k}$ as a function of $h_{(x,y)}$ for a particular topic $k$. We also show embeddings of three articles in the same space, as well as their projection onto selected paintboxes. Each article is connected to its most-used topics.}
\label{fig:topic_paintbox}
\end{figure*}

Since PRME encodes complex correlation patterns with a neural network, we further consider the influence of network architecture on perplexity for the New York Time dataset. We compare four layer designs: multilayer perceptron (MLP), MLP with batch normalization (MLP+BN), ResNet, and ResNet with batch normalization (ResNet+BN); see Figure \ref{fig:network_architecture_detail} for details. In Table~\ref{tab:network_architecture_depth} and Table~\ref{tab:network_architecture_width}, we separately tune the depth of each network and the hidden size of $h/\ell$ while holding other parameters fixed. The details of layer sizes can be found in Table~\ref{tab:network_layer_detail}. We observe that the perplexity result tend to be better when we scale up the network depth/width. Batch normalization and ResNet both improve performance.

\subsection{Online experiments}
For the larger one million New York Times dataset, we show ``topic paintboxes" learned with stochastic PRME in Figure \ref{fig:topic_paintbox}.\footnote{We set $t_0=100,\kappa=0.75$ and use a 6-layer MLP.}
In Figure \ref{fig:topic_paintbox}, each paintbox corresponds to one topic whose top words are displayed inside the box. The color of a pixel $(x,y)$ in the $k$-th paintbox ranges from blue (small value) to red (large value) and represents mean topic strength $\bbE[Z_{(x,y),k}]=\beta\widehat{p}_k\bbE[\exp(f(h_{(x,y)},\ell_k))]$ as a function of $h_{(x,y)}$ for topic $k$. To define $h_{(x,y)}$ for 2d visualization, we collect the empirical embeddings $H=[h_1,\ldots,h_N]^\top=[g(X_1),\ldots,g(X_N)]^\top$ on a subset of data, subtract their mean $m_h$, and use the SVD to select the two most informative directions $\widetilde{h}_1,\widetilde{h}_2$ with singular values $s_1,s_2$. Then we plot each paintbox as the function value $\bbE[Z_{(x,y),k}]=\beta\widehat{p}_k\bbE[\exp(f(m_h+x s_1\widetilde{h}_1+y s_2\widetilde{h}_2,\ell_k))]$ by tuning $(x,y)\in[-0.2,0.2]^2$. 

The correlation between topics can be read out from the paintboxes. Those paintboxes that have overlapping salient regions tend to be more correlated. For example, topic 13 [music, concert, orchestra], topic 20 [film, movie, films], and topic 47 [book, books, publishing] share a salient region, which gives a third-order positive correlations over those topics. In principle, the paintbox can explain arbitrary order correlations as the neural network complexity increases. We observe that each paintbox in Figure \ref{fig:topic_paintbox} consists of multiple contiguous salient regions. This is due to the smoothness of neural networks, since $g(X_{n_1})\approx g(X_{n_2})$ when $X_{n_1}$ and $X_{n_2}$ share similar words. Also, the various ``modes" in each paintbox demonstrate the greater flexibility of neural networks in explaining different contexts of a topic.

In Figure \ref{fig:topic_paintbox}, we also display three documents with their embeddings $h_n$ projected onto the 2d paintbox space. Each embedding hits salient regions of several paintboxes. Thus, each document can be interpreted as a mixture of these corresponding topics. We again note that we only display the paintbox in 2d via post-processing, but the actual paintbox is in 20 dimension; a higher-dimensional paintbox can be more complex than what is shown.

We can compare the difference between paintboxes for PRME in Figure \ref{fig:topic_paintbox} and paintboxes for binary random measures in Figure \ref{fig:paintbox_toy}. First, the paintbox for PRME is real-valued, so it is natural to use smooth functions to model it. In the binary case the paintbox is zero/one valued; in this case one can apply a threshold function over the PRME paintbox to binarize it. Second, in contrast to the binary paintbox, each PRME paintbox is unbounded. We control the area of this salient region through regularization.

Figure \ref{fig:online_exp}(a) demonstrates the perplexity of DILN and PRME with various decay speed $\kappa$ on a held-out test set of size 3K. PRME converges after seeing one million documents, and it performs better than DILN. Also, online learning is much more efficient than batch learning with various training data size, as shown in Figure \ref{fig:online_exp}(b). In Figure \ref{fig:online_exp}(c), we compare run times for updating local parameters ($[Z,C]$ for PRME) and global parameters ($[\theta,\ell,V,g,f]$ for PRME) with batch size 500. Since the cost is very imbalanced between local and global, for demonstration purpose we compare the cost between five local iterations and one global iteration. In our experiments, local updates requires around 20 iterations to converge. Compared with DILN, PRME costs much less in local and costs more in global updates, since it uses the VAE to transfer local updates for $h_n$ into global updates for $g$. The extra global cost ($\sim$0.35s) is significantly smaller than the reduced local cost ($\sim$4s), even when using a deep network architecture. Finally, Figure \ref{fig:online_exp}(d) demonstrates the usage proportion for all topics. PRME tends to use a subset of the 100 available topics in the truncated posterior, indicating use by the model of this nonparamteric feature.

\begin{figure*}[t!]
\centering
\includegraphics[width=1\linewidth]{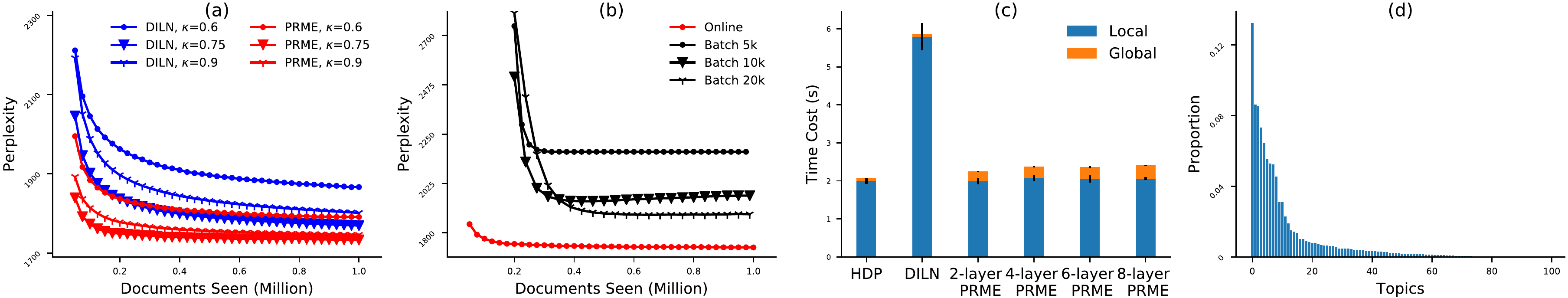}
\caption{(a) Online performance comparisons between DILN and PRME. (b) Online versus batch. (c) Time cost comparison between updating local and global variables. (d) Ranked topic usage proportions in the posterior, indicating nonparametric functionality.}
\label{fig:online_exp}
\end{figure*}

\section{Discussion}
\label{sec:discussions}
\subsection{Connections with other random objects}

Another view is to treat $(Z_{nk})_{n\in[N],k\in[K]}$ as a bipartite graph over objects $[N]$ and atoms (features) $[K]$ with edge strength $Z_{nk}$. An important topic in random graph theory is to study the total strength of edges $|E|=\sum_{n\in[N],k\in[K]}\bbE[Z_{nk}]$ asymptotically as a function of $N$. There has been extensive work on random graphs, networks, and relational models~\cite{Roy08,Miller09,Caron12,Lloyd12,Veitch15,Cai16,Lee16,Crane17,Caron17,Caron17b}, but these methods mainly focus on dense graphs where $|E|\sim\cO(N^2)$, and sparse graphs where $|E|\sim\cO(N^{1+\alpha})$ with $0<\alpha<1$ or $|E|\sim\cO(N\log N)$. Our method offers a new solution to \textit{extremely sparse hidden graphs} where $|E|\sim\cO(N)$, by coupling random functions and a Poisson process. Our solution cannot be trivially derived from previous representations in sparse/dense graphs.  There is a developed probability theory building connections between exchangeable binary random measures and functions on combinatorial structures among atoms~\cite{Pitman95,Pitman06,Broderick13,Broderick15,Heaukulani16,Campbell18}.

Our topic model construction is motivated by previous research on dependent random measures~\cite{Zhou11,Paisley12,Chen13,Foti13,Zhang15,Zhang16}. Our focus is to place \textit{mild exchangeability assumptions} on a population random measure $\xi$ and derive a very general random function model through representation theorems. Hence our use of neural networks to achieve this task. We mention that our method can also be adapted to non-exchangeable settings.

\subsection{Deep hierarchical Bayesian models}\vspace{-3pt}
One can scale up model capacity by stacking multiple one-layer Bayesian nonparametric models such as Dirichlet processes~\cite{Teh05}, beta processes~\cite{Thibaux07}, and Gamma processes~\cite{Zhou15,Zhou18}. Population random measure embedding uses a different strategy by constructing random measures as a coupling of random functions with {a single Poisson process}. In this way, we transfer all the model complexity into random functions $f_n$. Using amortized variational inference, we transfer posterior inference of discrete random measures into optimizing neural networks, which is much more efficient.

\subsection{Posterior inference bottleneck}\vspace{-3pt}
Efficient posterior inference is essential in Bayesian nonparametric methods where conjugacy often does not hold~\cite{Broderick14,Zhang16b}. In principle, one can apply a simple prior on $Z$ and still rely on accurate posterior inference to resolve the structure. However, posterior inference for random measures is not simple because complex correlations among atoms leads to slow MCMC mixing. Instead, one can approximate the posterior using variational methods~\cite{Blei17} and try to learn a $q$ distribution with good approximation quality~\cite{Paisley12b,Hoffman15,Ranganath16,Tran17}. Our method introduced a structured prior to regularize variational inference. Empirical results showed that we get an interpretable posterior.

\section{Conclusion and Future Work}
\label{sec:conclusion}
We presented random function priors to handle complex correlations among features via a population random measure embedding. We further derived a new Bayesian nonparametric topic model to demonstrate the effectiveness of our method for learning topic correlations through deep neural networks with amortized variational posterior inference. In future work, we will consider the more challenging task of removing the non-differentiable Poisson process and making our model fully differentiable. 

\section*{Acknowledgements}
We thank Howard Karloff and Victor Veitch for their helpful comments during the early stage of this work. This research was supported in part by funding from Capital One Labs in New York City.

\bibliography{sample}
\bibliographystyle{icml2019}


\clearpage
\newpage

\section*{Appendix}
\label{sec:appendix}
\begin{figure*}[t]
\centering
\includegraphics[width=.8\linewidth]{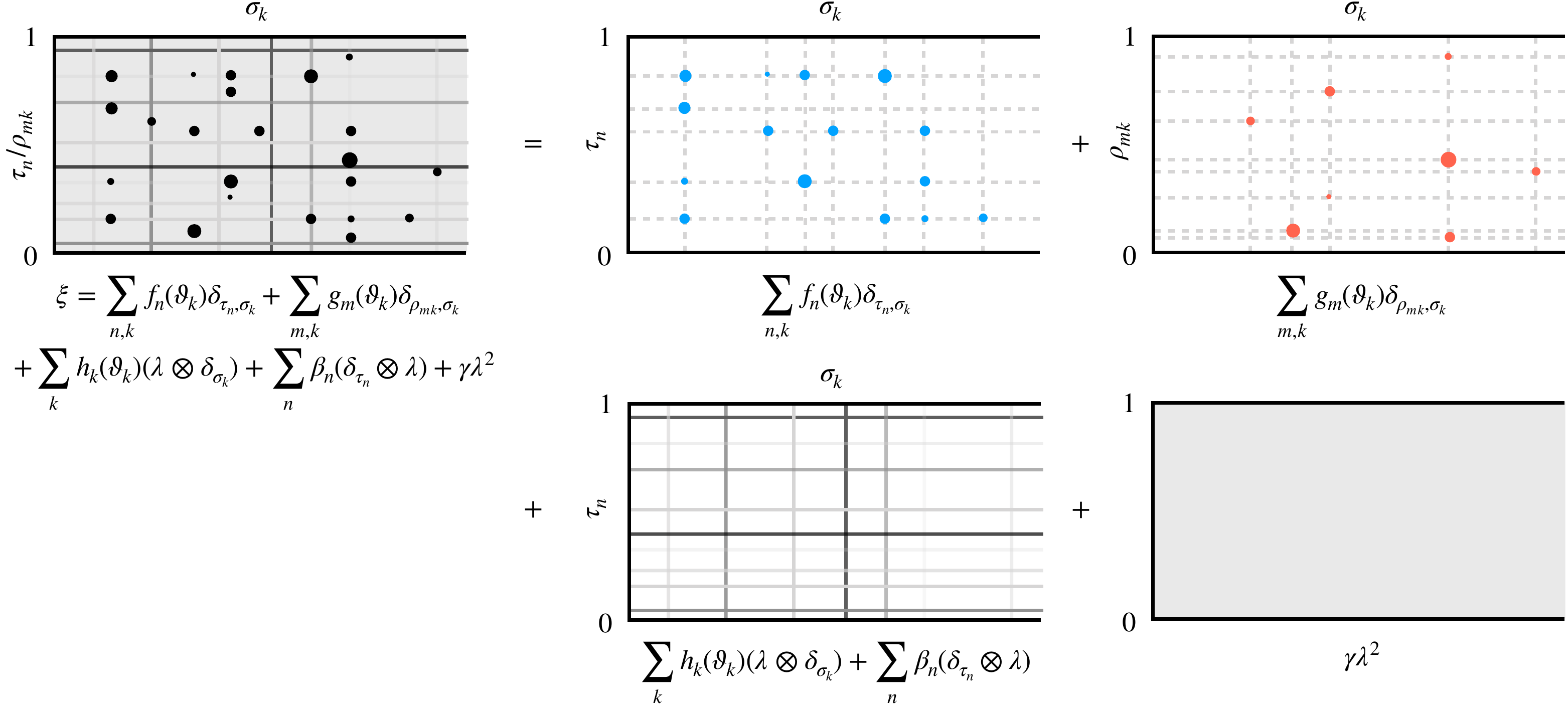}\vspace{-10pt}
\caption{Decouple separately exchangeable random measure $\xi$ into four parts.}
\label{fig:slfm_appendix}
\end{figure*}

\subsection*{Proof of Proposition~\ref{prop:empty_binary_matrix}.}
\begin{proof}
From ~\cite{Aldous85,Hoover79,Orbanz15}, we can represent every separately exchangeable infinite binary matrix $Z=(Z_{nk})$ if and only if it can be represented as follows: There is a random function $W:[0,1]^2\rightarrow[0,1]$ such that
\begin{align}
(Z_{nk})\stackrel{d}{=}(\mathbbm{1}(R_{nk}<W(U_n,V_k))).
\end{align}
Thus, one can reconstruct $Z$ by first sample $W,(U_n),(V_k)$ and then sample $Z_{nk}|~W,(U_n),(V_k)\sim\text{Bernoulli}(W_{U_nV_k})$ through independent coin flips. It is straightforward to prove that the finite row sum assumption can only be satisfied when $\int_{[0,1]^2} W(u,v)dudv=0$. When that happens, $Z=\bf0$ almost surely.
\end{proof}

\subsection*{Proof of Proposition~\ref{prop:representation}.}
\begin{proof}
The representation theorem in Proposition~\ref{prop:representation} is immediate from a more general result of separately exchangeable random measures. We temporarily reload notations $h_k,\beta_n$.

\begin{citedthm}[\cite{Kallenberg06}]
A random measure $\xi$ on $[0,1]\times\bbR_+$ is separately exchangeable if and only if almost surely
\begin{align}
\xi&=\underbrace{\sum_{n,k}f_n(\vartheta_k)\delta_{\tau_n,\sigma_k} + \sum_{m,k}g_m(\vartheta_k)\delta_{\rho_{mk},\sigma_k}}_{\text{point masses}}\nonumber\\
&+ \underbrace{\sum_k h_k(\vartheta_k)(\lambda\otimes\delta_{\sigma_k}) + \sum_n\beta_n(\delta_{\tau_n}\otimes\lambda)}_{\text{line measures}}\nonumber\\
&+ \underbrace{\gamma\lambda^2}_{\text{diffuse measure}},
\label{eq:strip_representation_appendix}
\end{align}
for some measurable functions $f_n,g_m,h_k\geq 0$ on $\bbR_+^2$, a unit rate Poisson process $\{(\vartheta_k,\sigma_k)\}$ on $\bbR_+^2$, some independent $U(0,1)$ arrays $(\tau_n)$ and $(\rho_{mk})$, an independent set of random variables $\beta_n,\gamma\geq 0$, and the Lebesgue measure $\lambda$. The latter can then be chosen to be non-random if and only if $\xi$ is extreme.
\end{citedthm}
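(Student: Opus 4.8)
This is the representation theorem for separately exchangeable random measures on $[0,1]\times\bbR_+$; the plan is to prove its two implications separately, treating \emph{sufficiency} as a short warm-up and reserving the bulk of the work for \emph{necessity}. For sufficiency, I would assume $\xi$ has the stated form and check directly that $\xi\circ(\cT_1\times\cT_2)^{-1}\stackrel{d}{=}\xi$ for every pair of measure-preserving maps $\cT_1$ on $[0,1]$ and $\cT_2$ on $\bbR_+$. This reduces to three standard invariances that combine precisely because the ingredients are mutually independent: a unit-rate Poisson process on $\bbR_+^2$ is invariant in law under any measure-preserving map applied to one coordinate, a $U(0,1)$ array is exchangeable under measure-preserving maps of $[0,1]$, and the Lebesgue measure $\lambda$ on either axis is itself preserved. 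Pushing $\cT_2$ through the heights $\sigma_k$ and $\cT_1$ through $\tau_n$ (resp.\ $\rho_{mk}$) leaves each of the five term types distributionally unchanged, including the line measures $h_k(\vartheta_k)(\lambda\otimes\delta_{\sigma_k})$, $\beta_n(\delta_{\tau_n}\otimes\lambda)$ and the diffuse part $\gamma\lambda^2$, so $\xi$ is separately exchangeable.

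The substance is the converse, where I would follow the general strategy for representation theorems under probabilistic symmetry (the Aldous--Hoover--Kallenberg circle of ideas). First I would pass to the \emph{extreme} case by an ergodic decomposition: the separately exchangeable random measures form a convex set, every element is a mixture of its extreme points, and it suffices to represent the extreme ones, for which the directing objects (the functions and the constants $\beta_n,\gamma$) become non-random. This is exactly the final clause of the statement, and the general representation is recovered by mixing, which reintroduces the randomness of $f_n,g_m,h_k,\beta_n,\gamma$. Next I would exploit the two symmetries one axis at a time. Invariance in the $\bbR_+$ direction, together with $\sigma$-finiteness along that axis, forces the atom heights $\sigma_k$ to form a Poisson process carrying i.i.d.\ marks $\vartheta_k$ — the half-line counterpart of the representation of invariant random measures, whose atomic part on $\bbR_+$ is Poisson. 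Invariance in the $[0,1]$ direction is a finite-measure symmetry and feeds into the exchangeable-array representation, producing the i.i.d.\ $U(0,1)$ coordinates $\tau_n$ and $\rho_{mk}$ together with measurable representing functions.

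The heart of the argument is combining these two pieces and \emph{classifying} the resulting mass. Conditioning on the $\bbR_+$-structure (the marked Poisson process $(\vartheta_k,\sigma_k)$) and disintegrating the conditional law in the $[0,1]$ direction, I would show that every contribution falls into exactly one of five types according to how many of an atom's coordinates are ``shared'' across the grid: genuine grid atoms at intersections $(\tau_n,\sigma_k)$ carrying the weight $f_n(\vartheta_k)$; private atoms $(\rho_{mk},\sigma_k)$ pinned to a single Poisson height and carrying $g_m(\vartheta_k)$; the two families of line measures in which one coordinate is smeared by Lebesgue measure; and the fully diffuse planar part $\gamma\lambda^2$. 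A measurable-coding and selection step then assembles the disintegration into jointly measurable functions $f_n,g_m,h_k\ge 0$ on $\bbR_+^2$ living on a common randomized probability space.

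I expect the main obstacle to be precisely this classification-and-combination step: proving that separate exchangeability admits \emph{no other} term types and that the atomic part factorizes exactly as a grid driven by one Poisson process crossed with one exchangeable array. Establishing that the $\bbR_+$-marginal atom positions must be Poisson (rather than some other invariant point process), that no cross terms appear, and that the mixed axis-aligned components are genuinely Lebesgue-smeared requires the full multivariate exchangeability machinery together with Palm-calculus and disintegration arguments. It is here that the bookkeeping among the five components — and the accompanying proof that the directing parameters degenerate to constants exactly in the extreme case — becomes delicate.
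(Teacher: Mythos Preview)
The paper does not prove this statement at all: it is stated as a cited theorem from \cite{Kallenberg06} and invoked as a known result, with no argument given. The paper's only use of it is to immediately specialize by discarding the line measures and the diffuse part to obtain Proposition~\ref{prop:representation}. So there is no ``paper's own proof'' to compare your proposal against.

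Your sketch is a reasonable high-level outline of how Kallenberg's argument proceeds --- sufficiency by direct invariance checks, necessity via ergodic decomposition to the extreme case followed by a one-axis-at-a-time analysis (Poisson structure from the $\bbR_+$ symmetry, exchangeable-array structure from the $[0,1]$ symmetry) and then a classification of mass types. But be aware that what you have written is a strategy, not a proof: the actual argument in Kallenberg's book is long and technical, and the step you flag as the main obstacle (the combination/classification step ruling out any other term types) genuinely is the hard part and requires substantial machinery that your proposal only names. If the intent is merely to match the paper, you should simply cite the result as the paper does; if the intent is to supply an independent proof, the proposal would need to be expanded into a full argument rather than a roadmap.
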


The representation theorem consists of three parts: point masses, line measures, and a diffuse measure. We select the point masses part for discrete separately exchangeable random measures. Decomposition of the entire measure $\xi$ is demonstrated in Fig~\ref{fig:slfm_appendix}.
\end{proof}

\newpage
\subsection*{Discussion on Section~\ref{subsec:crm}. Existing models as special cases of PRME model.}
Let $\xi=\sum_{n,k}f_n(\vartheta_k)\delta_{\tau_n,\sigma_k}$ be our PRME model. We focus on a specific object $n$, remove the redundant $\tau_n$, and directly work on random measures on $\Theta$. This transformation let us be on the same page of other research on completely random measures.

We have $\xi_n=\sum_{k}f_n(\vartheta_k)\delta_{\theta_k}$ be a population random measure embedding model, where $(\vartheta_k,\theta_k)$ is a Poisson process on $\bbR_+\times\Theta$ with mean measure $p(\theta)d\vartheta d\theta$. The according CRM is $\lambda=\sum_{k}\widetilde{\vartheta}_k\delta_{\theta_k}$ with Lévy measure $\nu(d\widetilde{\vartheta},d\theta)=\mu(\widetilde{\vartheta})p(\theta)d\widetilde{\vartheta}d\theta$. Assume the tail function~$T(\widetilde{\vartheta})=\nu((\widetilde{\vartheta},\infty),\Theta)$ is invertible. One can do a transformation between atoms by $(\vartheta_k,\theta_k)\rightarrow(T^{-1}(\vartheta_k),\theta_k)=(\widetilde{\vartheta}_k,\theta_k)$. The following examples are just special cases of this transformation, as we shall see.

\subsubsection*{IBP and extensions}
The Indian buffet process (IBP) take a particular form $\xi_n=\sum_{k}f\circ T^{-1}(\vartheta_k)\delta_{\theta_k}$, where $f$ are independent Bernoulli random variables with success rate $T^{-1}(\vartheta_k)$. IBP uses a particular transformation $T^{-1}(\vartheta_k)=e^{-\vartheta_k}$~\cite{Thibaux07}.~\cite{Teh09} gives a power-law extension of IBP with three parameters (3IBP) with an application in language models. However, 3IBP does not enjoy an analytical form for $T^{-1}$. But we can safely work on the CRM directly, given the generality of the existence of $T^{-1}$~\cite{Orbanz11}. One can observe that the sampling function $f\circ T^{-1}$ does not change with $n$. This is the main limitation for IBP and 3IBP. A MCMC sampling solution can be found in~\cite{Griffiths11,Teh09}.

\subsubsection*{Correlated random measures}
The key restrictions of IBP and 3IBP is that $\bbE[\xi_n(\{\theta_{k_1}\})\cdot\xi_n(\{\theta_{k_2}\})|\widetilde{\vartheta}]=\bbE[\xi_n(\{\theta_{k_1}\})|\widetilde{\vartheta}]\cdot\bbE[\xi_n(\{\theta_{k_2}\})|\widetilde{\vartheta}]$. In order to model feature correlations,~\cite{Paisley12} model $f_n(\vartheta_k)$ as exchangeable random functions. The extra randomness besides $\widetilde{\vartheta}$ can be modelled by augmenting the Poisson process $(\widetilde{\vartheta}_k,\theta_k)$ on $\bbR_+\times\Theta$ to higher dimension $(\ell_k,\widetilde{\vartheta}_k,\theta_k)$ on $\bbR^d\times\bbR_+\times\Theta$ with mean measure $\nu(d\ell,d\widetilde{\vartheta},d\theta)=p(\ell)\mu(\widetilde{\vartheta})p(\theta)d\ell d\widetilde{\vartheta}d\theta$. The discrete infinite logistic normal distribution (DILN)~\cite{Paisley12} further proposes an example $\xi_n=\sum_k Z_{nk}(\beta\widetilde{\vartheta}_k,\exp(-h_n(\ell_k)))\delta_{\sigma_k,\tau_n}$, where $h_n(\cdot)\sim\text{GP}(m(\cdot),K(\cdot,\cdot))$ and $Z_{nk}$ is a gamma distribution parameterized by its shape and scale parameters. However, DILN is restricted to use linear kernels, which is very restrictive.~\cite{Ranganath18} proposed general correlated random measures with examples for the binary, discrete, and continuous cases.

\subsection*{Section~\ref{sec:topic_model}. Detailed derivations.}
The variational objective function can be decoupled as
\begin{align}
\cL&=\sum_{k=1}^K\bbE\Big[\ln p(\ell_k)+\ln p(V_k)+\ln p(\theta_k)\Big]\nonumber\\
&+\sum_{n=1}^N\bbE\Big[\ln p(h_n)\Big] +\sum_{n=1}^N\sum_{k=1}^K\bbE\Big[\ln p(Z_{nk}|V,h_n,\ell_k)\Big]\nonumber\\
&+\sum_{n=1}^N\sum_{m=1}^{M_n}\bbE\Big[\ln p(C_n^{(m)}|Z_n)+\ln p(X_n^{(m)}|C_n^{(m)},\theta)\Big]\nonumber\\
&+\bbH\Big[q(\ell,V,\theta,h,Z,C)\Big].
\label{eq:vi_obj_batch}
\end{align}
We expand each term in Eq.~(\ref{eq:vi_obj_batch}) as follows.
\begin{align}
&\bbE[\ln p(\ell_k)]=\ln p(\widehat{\ell}_k)=-\frac{r_\ell\ln(2\pi b)}{2}-\frac{\widehat{\ell}_k^\top\widehat{\ell}_k}{2b}.\\
&\bbE[\ln p(V_k)]\!=\!\ln p(\widehat{V}_k)\!=\!\ln\alpha+(\alpha-1)\ln(1-\widehat{V}_k).\\
&\bbE[\ln p(\theta_k)]=\ln\Gamma(D\gamma_0)-D\ln\Gamma(\gamma_0)\nonumber\\
&\qquad\qquad\quad+\sum_{d=1}^D (\gamma_0-1)\bbE[\ln\theta_{kd}],\nonumber\\
&\text{where }\bbE[\ln\theta_{kd}]=\psi(\gamma_{kd})-\psi(\sum_{d=1}^D\gamma_{kd}).
\end{align}

\begin{align}
\bbE[\ln p(h_n)]=\ln p(\widehat{h}_n)=-\frac{r_h\ln(2\pi a)}{2}-\frac{\widehat{h}_n^\top\widehat{h}_n}{2a}.
\end{align}

\begin{align}
&\bbE[\ln p(Z_{nk}|V,h_n,\ell_k)]\!=\!-\ln\Gamma(\beta \widehat{p}_k)\!-\!\beta \widehat{p}_k\bbE[f(h_n,\ell_k)]\nonumber\\
&\quad+(\beta\widehat{p}_k-1)\bbE[\ln Z_{nk}]-\bbE[Z_{nk}]\bbE[\exp(-f(h_n,\ell_k))],\nonumber\\
&\text{where } \bbE[f(h_n,\ell_k)]=\mu_f(\widehat{h}_n,\widehat{\ell}_k),\nonumber\\ &\bbE[\exp(-f(h_n,\ell_k))]\!=\!\exp\Big(\!\!-\!\mu_f(\widehat{h}_n,\widehat{\ell}_k)\!+\!\frac{1}{2}\sigma^2_f(\widehat{h}_n,\widehat{\ell}_k)\!\Big),\nonumber\\
&\bbE[\ln Z_{nk}]=\ln(b_{nk})+\psi(a_{nk}),~\bbE[Z_{nk}]=a_{nk}b_{nk}.
\end{align}

\begin{align}
\bbE[\ln p(C_{nm}|Z_n)]\!=\!\sum_{k=1}^K\!\phi_{nm}(k)\bbE\Big[\!\ln\! Z_{nk}\!-\!\ln\!\sum_{k'=1}^K\!Z_{nk'}\!\Big].
\end{align}

\begin{align}
&\bbE[\ln p(X_{nm}|C_{nm},\theta)]=\sum_{k=1}^K\phi_{nm}(k)\bbE[\ln\theta_{k,X_{nm}}],\nonumber\\
&\text{where }\bbE[\ln\theta_{k,X_{nm}}]=\psi(\gamma_{k,X_{nm}})-\psi(\sum_{d=1}^D\gamma_{kd}).\\
&\bbH[q(\ell_k)]=0.\\
&\bbH[q(V_k)]=0.\\
&\bbH[q(\theta_k)]=\sum_{d=1}^D\ln\Gamma(\gamma_{kd})-\ln\Gamma(\sum_{d=1}^D\gamma_{kd})\nonumber\\
&\qquad\qquad-\sum_{d=1}^D (\gamma_{kd}-1)\bbE[\ln\theta_{kd}].\\
&\bbH[q(h_n)]=0.\\
&\bbH[q(Z_{nk})]=a_{nk}+\ln(b_{nk})+\ln\Gamma(a_{nk})\nonumber\\
&\qquad\qquad\quad+(1-a_{nk})\psi(a_{nk}).\\
&\bbH[q(C_{nm})]=-\sum_{k=1}^K \phi_{nm}(k)\ln\phi_{nm}(k).
\end{align}
Variational inference for $\ell_k,V_k$ and network parameters can be done by directly plug-in and take gradients. Updating $q(\theta_k)$ and $q(C_{nm})$ follows the general variational update rule. Updating $q(Z_{nk})$ requires lower-bounding $\cL$.
\\~\\
For $\ell$, we use gradient ascent:
\begin{align}
\nabla_\ell{\cL} &= \sum_{k=1}^K\nabla_\ell\bbE\Big[\ln p(\ell_k)\Big] \nonumber\\
&+ \sum_{n=1}^N\sum_{k=1}^K\nabla_\ell\bbE\Big[\ln p(Z_{nk}|V,h_n,\ell_k)\Big].
\label{eq:update_ell_appendix}
\end{align}
For $V$, we use gradient ascent:
\begin{align}
\nabla_V{\cL} &= \sum_{k=1}^K\nabla_V\bbE\Big[\ln p(V_k)\Big] \nonumber\\
&+ \sum_{n=1}^N\sum_{k=1}^K\nabla_V\bbE\Big[\ln p(Z_{nk}|V,h_n,\ell_k)\Big].
\label{eq:update_v_appendix}
\end{align}
For $\theta$, we have a closed-form update:
\begin{align}
\gamma_{kd}=\gamma_0+\sum_{n=1}^N\sum_{m=1}^{M_n}\phi_{nm}(k)\cdot\mathbbm{1}(X_{nm}=d)
\label{eq:update_eta_appendix}
\end{align}
For $h_n$, we update the inference network $g$: 
\begin{align}
\nabla_g\cL&=\sum_{n=1}^N\nabla_g\bbE\Big[\ln p(h_n)\Big]\nonumber\\
&+ \sum_{n=1}^N\sum_{k=1}^K\nabla_g\bbE\Big[\ln p(Z_{nk}|V,h_n,\ell_k)\Big]
\label{eq:update_h_appendix}
\end{align}
For $Z_{nk}$, we maximize a lower bound for $\cL$ similar as~\cite{Paisley12}. Related terms in $\cL$ are:
\begin{align}
\cL(q(Z_{nk}))&=(\beta\widehat{p}_k-1+\sum_{m=1}^{M_n}\phi_{nm}(k))\bbE[\ln Z_{nk}]\nonumber\\
&-\bbE[Z_{nk}]\bbE[\exp(-f(\widehat{h}_n,\widehat{\ell}_k))]\nonumber\\
&-M_n\bbE\Big[\ln\sum_{k'=1}^K Z_{nk'}\Big]+\bbH[q(Z_{nk})].
\end{align}
The term that make closed-form update intractable is $\bbE[\ln\sum_{k'=1}^K Z_{nk'}]$. We use the bound:
\begin{align}
\bbE[\ln\sum_{k'=1}^K Z_{nk'}]\leq\ln\varepsilon_n+\frac{\sum_{k'=1}^K\bbE[Z_{nk'}]-\varepsilon_n}{\varepsilon_n}.
\end{align}
This bound is correct for any $\varepsilon_n>0$, and here we precompute $\varepsilon_n=\sum_{k=1}^K\bbE[Z_{nk}]$ and treat $\varepsilon_n$ as a constant in the above equation. After Plugging-in the bound and some algebra, we solve $q(Z_{nk})$ as:
\begin{align}
a_{nk}&=\beta\widehat{p}_k+\sum_{m=1}^{M_n}\phi_{nm}(k),\nonumber\\ 1/b_{nk}&=\bbE\Big[\exp(-f(h_n,\ell_k))\Big]+\frac{M_n}{\varepsilon_n}.
\label{eq:update_z_appendix}
\end{align}
\\~\\
For decoder network $f$:
\begin{align}
\nabla_f\cL=\sum_{n=1}^N\sum_{k=1}^K\nabla_f\bbE\Big[\ln p(Z_{nk}|V,h_n,\ell_k)\Big].
\label{eq:update_f_appendix}
\end{align}
\\~\\
For $C_{nm}$, we have a closed-form update:
\begin{align}
\phi_{nm}(k)\propto\exp\Big(\bbE[\ln\theta_{k,X_{nm}}]+\bbE[\ln Z_{nk}]\Big).
\label{eq:update_c_appendix}
\end{align}

\end{document}